\documentclass[sigconf]{acmart}

\AtBeginDocument{%
  \providecommand\BibTeX{{%
    \normalfont B\kern-0.5em{\scshape i\kern-0.25em b}\kern-0.8em\TeX}}}

\setcopyright{none}
\settopmatter{printacmref=false} 
\renewcommand\footnotetextcopyrightpermission[1]{} 
\pagestyle{plain} 
\makeatletter
\renewcommand\@formatdoi[1]{\ignorespaces}
\makeatother

\acmConference[]{}{}
\acmBooktitle{}
\acmPrice{}
\acmISBN{}

\usepackage{amssymb}
\usepackage{amsmath}
\usepackage{blkarray}
\usepackage{color}
\usepackage{times}
\usepackage{enumitem}
\usepackage{arydshln}
\usepackage{url}
\usepackage{algorithm}
\usepackage{algorithmic}

\usepackage{flushend}
\usepackage{subcaption}
\usepackage{balance}

\newcommand{\eat}[1]{}

\newtheorem{theorem}{Theorem} 

\newtheorem{definition}{Definition}
\newtheorem{example}{Example}

\usepackage{xspace}
\newcommand{\ar}{{$\mathcal{F}^*$}\xspace}
\newcommand{\fspace}{{$\mathcal{F}$}\xspace}
\newcommand{\up}{{\sc up}\xspace}

\usepackage{graphicx}
\usepackage{comment}
\usepackage[colorinlistoftodos]{todonotes}

\begin{document}

\title{Responsible Scoring Mechanisms Through Function Sampling}

\author{Abolfazl Asudeh}
\email{asudeh@uic.edu}
\affiliation{%
 University of Illinois at Chicago
}

\author{H. V. Jagadish}
\email{jag@umich.edu}
\affiliation{%
 University of Michigan
}

\renewcommand{\shortauthors}{ }

\begin{abstract}
Human decision-makers often receive assistance from data-driven algorithmic systems that provide a score for evaluating objects, including individuals. 
The scores are generated by a function (mechanism) that takes a set of features as input and generates a score.
The scoring functions are either machine-learned or human-designed and can be used for different decision purposes such as ranking or classification.

Given the potential impact of these scoring mechanisms on individuals' lives and on society, it is important to make sure these scores are computed responsibly. 
Hence we need tools for responsible scoring mechanism design. 
In this paper, focusing on linear scoring functions, we highlight the importance of unbiased function sampling and perturbation in the function space for devising such tools.
We propose unbiased samplers for the entire function space, as well as a $\theta$-vicinity around a given function
We then illustrate the value of these samplers for designing effective algorithms in three diverse problem scenarios in the context of ranking.
Finally, as a fundamental method for designing responsible scoring mechanisms, 
we propose a novel approach for approximating the construction of the arrangement of hyperplanes.
Despite the exponential complexity of an arrangement in the number of dimensions, using function sampling, our algorithm is linear in the number of samples and hyperplanes, and independent of the number of dimensions.
\end{abstract}
\maketitle
\section{Introduction}\label{sec:intro}
Data-driven decision making is increasingly used in recent years, with significant impacts in many aspects of society. In many systems, data are reduced to a single numeric score, which is then used for the decision. For example, recidivism prediction software may consider multiple parameters to determine a numeric score that indicates the likelihood a person will re-offend. Similarly, employee prospect selection software may score and then rank applicants based on various attributes.

While data-driven decisions offer the promise of being uniform and objective, they can suffer from many imperfections. For example, a recidivism predictor may consistently rate African Americans more likely to re-offend than members of other races. In other words, it may have disparate impact, Our motivation in this paper is to assist in the development of responsible scoring systems that can avoid such harms.

\begin{figure}[!t]
    \centering
    \includegraphics[width=0.4\textwidth]{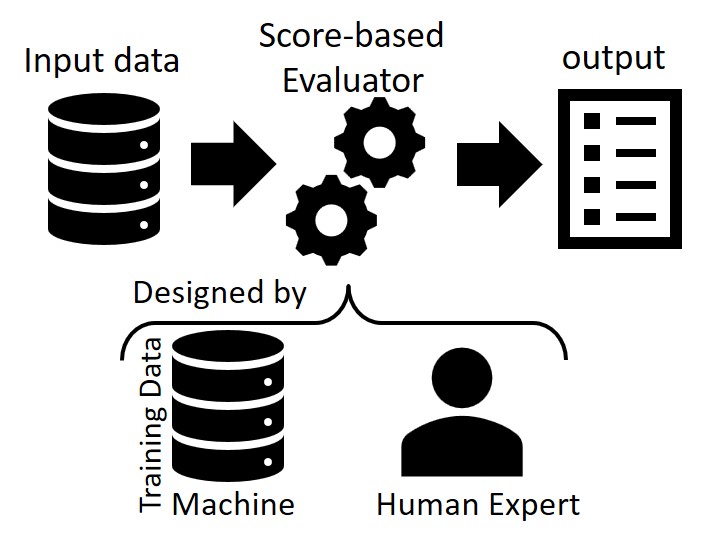}
    \caption{The general architecture of a score-based system}
    \label{fig:arch}
\end{figure}

Unfortunately, there isn't just one thing we must be mindful of to be responsible, even for fairness alone, there are numerous definitions, not all of which can be satisfied simultaneously~\cite{narayanan2018translation}.
Even though we may be interested in many desirable properties, including diversity, representation, and so on, without loss of generality we will refer to all such desirable characteristics as {\em fairness} characteristics.
In spite of their diversity, most such fairness characteristics can be defined either as a Boolean or as a score, and this is all we will rely on in this paper.

Scoring functions too, can be defined in many different ways. However, in practice, many scoring functions are linear, determined as a weighted sum of input values of features/attributes. 
Even when a scoring function is not linear, it can often be represented as a linear function after applying non-linear transformations on the attributes as a pre-processing step. For example, a multiplicative combination can be rendered linear by considering logarithms.
Therefore, in this paper, we 
concentrate on linear scoring functions.

When a machine is used to learn a fair scoring function, it is sometimes possible to express the desired fairness criteria as constraints on the learning (or optimization) problem.  There is a body of excellent recent work in this direction.
Nevertheless, there remain many situations for which such a problem statement is not possible, In such cases, we may need to explore the space of scoring functions to find one that is fair. This exploration could be a broad exploration of the entire function space, or a targeted exploration of function space close to a scoring function learned through a traditional method, without any fairness constraints.

Unfortunately, the space of possible functions is very large, even when we restrict our attention to linear scoring functions. The size of the space is exponential in the number of features/attributes considered. In consequence, any exploration of function space can quickly become very expensive, sufficiently so to be unusable in practice. A standard technique used in such scenarios is Monte Carlo simulation: rather than exploring every point in the space, we can choose a random sample. Monte Carlo methods require that this be an unbiased truly random sample. While it is straightforward to obtain a random sample in an ordinary multi-dimensional space, it turns out not to be so easy in function space, as we discuss in detail below. This paper addresses this need. While uniform function sampling may appear to be an esoteric technical exercise, it is key to responsible scoring, as we show below through multiple examples. Specifically, we consider three very different scenarios: (i) the design of a "fair" scoring function, (ii) assessment of a data set from a fairness viewpoint, and (iii) evaluation of analytical result stability as an inverse measure of "cherry-picking". Efficient function sampling is key to randomized algorithms for each.

In the preceding several paragraphs, we have considered the case of a machine learned scoring function, However, automated decision systems need not have their decision rules be learned by machine, Instead, they could be specified by human experts, as shown in Fig.~\ref{fig:arch}. Indeed, very many important deployed systems make automated decisions based on human expert specified rules and human expert specified scoring functions.

When humans specify scoring functions, they rarely seek precision,  instead, they specify something that seems "reasonable". In consequence, there is room for a computer system to proposed modifying their specification to achieve greater fairness. Any such proposal is likely to require exploration of function space and hence function sampling.

In summary, our main points are:
\begin{itemize}
    \item 
    Recognizing that many automated decision systems have human-specified scoring functions.
    \item
    Noticing that general solutions to achieve fairness, and other similar desirable properties, in scoring systems require efficient methods to sample from function space.
    \item
    Developing techniques for unbiased random sampling in function space, optionally constrained to a specified vicinity.~$\S$\ref{sec:sampler}
    \item
    Demonstrating the value of these techniques in diverse application scenarios.~$\S$\ref{sec:app}
    \item
    Proposing an efficient approximate construction of the arrangement of hyperplanes as a fundamental method, which builds upon function sampling, for responsible scoring mechanism design. \S~\ref{sec:arrangement}
\end{itemize}
We begin, in $\S$\ref{sec:pre}, with the formal problem set up, and some necessary background in computational geometry.
\section{Background} \label{sec:pre}
\subsection{Data and scoring model}
Our data set $\mathcal{D}$ comprises $n$ tuples. Each tuple $t \in \mathcal{D}$ is a  vector of $d$ scalar scoring attributes and zero or more additional non-scoring attributes, $\langle t[1], t[2], \ldots, t[d], t[non-scoring] \rangle$. 
In particular, some non-scoring attributes such as race and gender are considered to be {\em sensitive} and are used for measuring (un)fairness. Other non-scoring attributes that may be used for filtering.

We consider the general architecture of a score-based system to be as in Figure~\ref{fig:arch}.
The central component of the system is an evaluator that assign a score to each tuple in the input data
and uses it to generate the output by, for example, ranking or classifying the input. 
The score of a tuple is computed as a combination of its scoring attributes.


%
\begin{definition}[Scoring function]\label{def:scoringfunction}
A {\em scoring function} $f_{\vec{w}}:\mathbb{R}^d\rightarrow\mathbb{R}$, with weight vector $\vec{w} ~=~\langle w_1, w_2,\ldots,w_d \rangle$, assigns a score $f_{\vec{w}}(t) = \Sigma_{j=1}^d w_j t[j]$ to a tuple $t\in\mathcal{D}$.
When $\vec{w}$ is clear, we denote $f_{\vec{w}}(t)$ by $f(t)$.
\end{definition}
As shown in Figure~\ref{fig:arch}, the weights of a scoring mechanism could either be learned by machine or assigned by (human) experts.
The induced scores are used for evaluating (e.g. classifying or ranking) tuples.
The rank of a tuple is defined as its position in the sorted list of tuples based on their scores.
The scoring weights may be derived from a set of training data, typically using standard machine learning techniques such as linear regression or support vector machine.
However, many
well-known rankings, such as US News university ranking and FIFA rankings, are human-designed, i.e. scoring weights are assigned by experts. 

To further clarify the terms, let us introduce Example~\ref{example1}.

\begin{example}\label{example1}
Consider a real estate agency with two offices in {\sc Chicago, IL} and {\sc Detroit, MI}.
The owner assigns the agents based on need (randomly) to the offices.
At the end of the year, she wants to give a promotion to the ``best'' three agents.
The criteria for choosing the agents are $x_1:$ {\tt \small sales} and $x_2:$ {\tt \small customer satisfaction}.
Let the values in $\mathcal{D}$, after normalization, be as in Figure~\ref{fig:toyexample:data}.
The dataset contains $n=6$ tuples, over $d=2$ scoring attributes $x_1$ and $x_2$ and one non-scoring attribute {\tt \small location}, which in this example is considered to be the sensitive attribute. Following our notation, $t_3[2]$ refers to the value $x_2$ for $t_3$, which is $0.78$.

Suppose that, the two scoring attributes being (roughly) equally important, the owner chooses the weights $\vec{w}=\langle 1, 1 \rangle$ for scoring. That is, the score of every agent is computed as $f=x_1+x_2$.
The 5th column in Figure~\ref{fig:toyexample:data} shows the scores, based on this function.
The user's objective is ranking in this example, since she is interested in finding the top-3 tuples.
According to function $f$, the top-3 agents are $t_6$, $t_4$, and $t_2$, with scores 1.4, 1.38, and 1.37, respectively. 
Note that, according to $f$, all top-3 agents are located in Chicago and no agent from Detroit is selected.
\end{example}

\begin{figure}[!tb]
\centering
    \begin{tabular}{|l|c|c|@{}c@{}||@{}c@{}|@{}c@{}|}
	\hline
	\multicolumn{4}{|c||}{$\mathcal{D}$} & $f$&$f'$ \\ \hline
	id   & $x_1$ & $x_2$ & location&$ \; \langle 1, 1 \rangle \;$&$\langle 1.11, .9 \rangle$ \\ \hline \hline
	$t_1$& 0.63 & 0.71&Detroit&1.34&1.338 \\ \hline
	$t_2$& 0.72 & 0.65&Chicago&1.37&1.384 \\ \hline
	$t_3$& 0.58 & 0.78&Detroit&1.36&1.387 \\ \hline
    $t_4$& 0.7 & 0.68&Chicago&1.38&1.389 \\ \hline
	$t_5$& 0.53 & 0.82&Detroit&1.35&1.321 \\ \hline
	$t_6$& 0.61 & 0.79&Chicago&1.4&1.388 \\ \hline
	\end{tabular}
	\vspace{-1mm}\caption{Example~\ref{example1} -- Data}
    \label{fig:toyexample:data}
\end{figure}

Linear scoring functions are straightforward to compute and easy to explain~\cite{asudeh2016query}.
That is a reason those are popular for evaluation in general.
However, it turns out that the evaluations based on the scores highly depend on the choice of weights.
For instance, a ranking may significantly change by small changes in the weights.
Consider Example~\ref{example1}.
The owner chose the weight vector $\vec{w}=\langle 1, 1 \rangle$, simply because it would make sense to her, without paying attention to the consequences in terms of fairness.
However, small changes in the weights could dramatically change the ranking. 
For example, the function $f'$ with the weight vector $\vec{w'}=\langle 1.1, 0.9 \rangle$ may be equally good for the owner and she may not even have a preference between $\vec{w}$ and $\vec{w}'$. Probably her choice of weights is only because $\vec{w}$ is more intuitive to human beings.
The last column in Figure~\ref{fig:toyexample:data} shows the scores based on $f'$, which produce the ranking $f': \langle t_4, t_6, t_3, t_2, t_1, t_5\rangle$. Comparing it with the ranking generated by $f:  \langle t_6, t_4, t_2, t_3, t_5, t_1\rangle$, one may notice that the rank of each and every individual has changed.
More importantly, while according to $f$ all promotions are given to the agents of the Chicago office, $f'$ gives two promotions to Chicago and one to Detroit.

\begin{figure}[!tb]
\centering
    \includegraphics[width=0.45\textwidth]{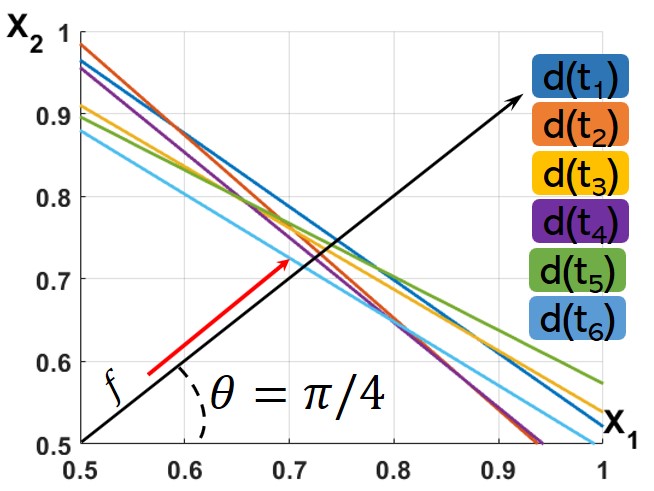}
    \caption{Example~\ref{example1}-Dual space}
    \label{fig:toyexample:dual}
\end{figure}

\subsection{Geometric interpretation}
{\em Primal space} is the popular geometric model for data, in which every attribute is modeled as a dimension and items are viewed as points in a multi-dimensional space.
Instead, we use a {\em dual space}~\cite{edelsbrunner} in $\mathbb{R}^d$, where an item $t$ is presented by a hyperplane $\mathsf{d}(t)$ given by the following equation of $d$ variables $x_1 \dots x_d$:
\begin{align}\label{eq:dual}
\mathsf{d}(t):~ t[1]\times x_1 + \dots + t[d]\times x_d = 1
\end{align}

Continuing with Example~\ref{example1}, Figure~\ref{fig:toyexample:dual} shows the items in the dual space. In $\mathbb{R}^2$, every item $t$ is a 2-dimensional hyperplane (i.e. simply a line) given by $\mathsf{d}(t): t[1] x_1 + t[2] x_2=1$.

A scoring function $f_{\vec{w}}$ is represented as a ray starting from the origin and passing through the point $[w_1, w_2,...,w_d]$.
For example, the function $f$ with the weight vector $\vec{w}=\langle 1,1 \rangle$ in Example~\ref{example1} is drawn in Figure~\ref{fig:toyexample:dual} as the origin-anchored ray that passes through the point $[1,1]$. Note that every scoring function (origin-anchored ray) can be identified by $(d-1)$ angles $\langle \theta_1, \theta_2, \cdots, \theta_{d-1} \rangle$, that can be computed using the polar coordinates of $w$.
For example, the function $f$ in Figure~\ref{fig:toyexample:dual} is identified by the angle $\theta=\pi/4$.

Consider the intersection of a dual hyperplane $\mathsf{d}(t)$ with the ray of a function $f$.
This intersection is in the form of $a\times\vec{w}$, because every point on the ray of $f$ is a linear scaling of $\vec{w}$.
Since this point is also on the hyperplane $\mathsf{d}(t)$,
$t[1]\times a\times w_1 + \dots + t[d]\times a\times w_d = 1$. Hence, $\sum t[j] w_j = 1/a$. 
This means that the dual hyperplane of any item with the score $f(t)=1/a$ intersects the ray of $f$ at point $a\times\vec{w}$.
As a result, the closer an intersection is to the origin, the higher is the score of its item.
Following this, the ordering of the items based on a function $f$ is determined by the ordering of the intersection of the hyperplanes with the vector of $f$. The closer an intersection is to the origin, the higher its rank.
For example, in Figure~\ref{fig:toyexample:dual}, the intersection of the line $t_6$ with the ray of $f=x_1 + x_2$ is closest to the origin, and $t_6$ has the highest rank for $f$.
Similarly, the border of a linear classifier can be viewed as a point on the ray of function that labels a tuple based on which side of the point in intersects the ray.


\section{Unbiased Function Sampling}\label{sec:sampler}

\begin{figure}[!tb]
        \centering
        \includegraphics[width = .45 \textwidth]{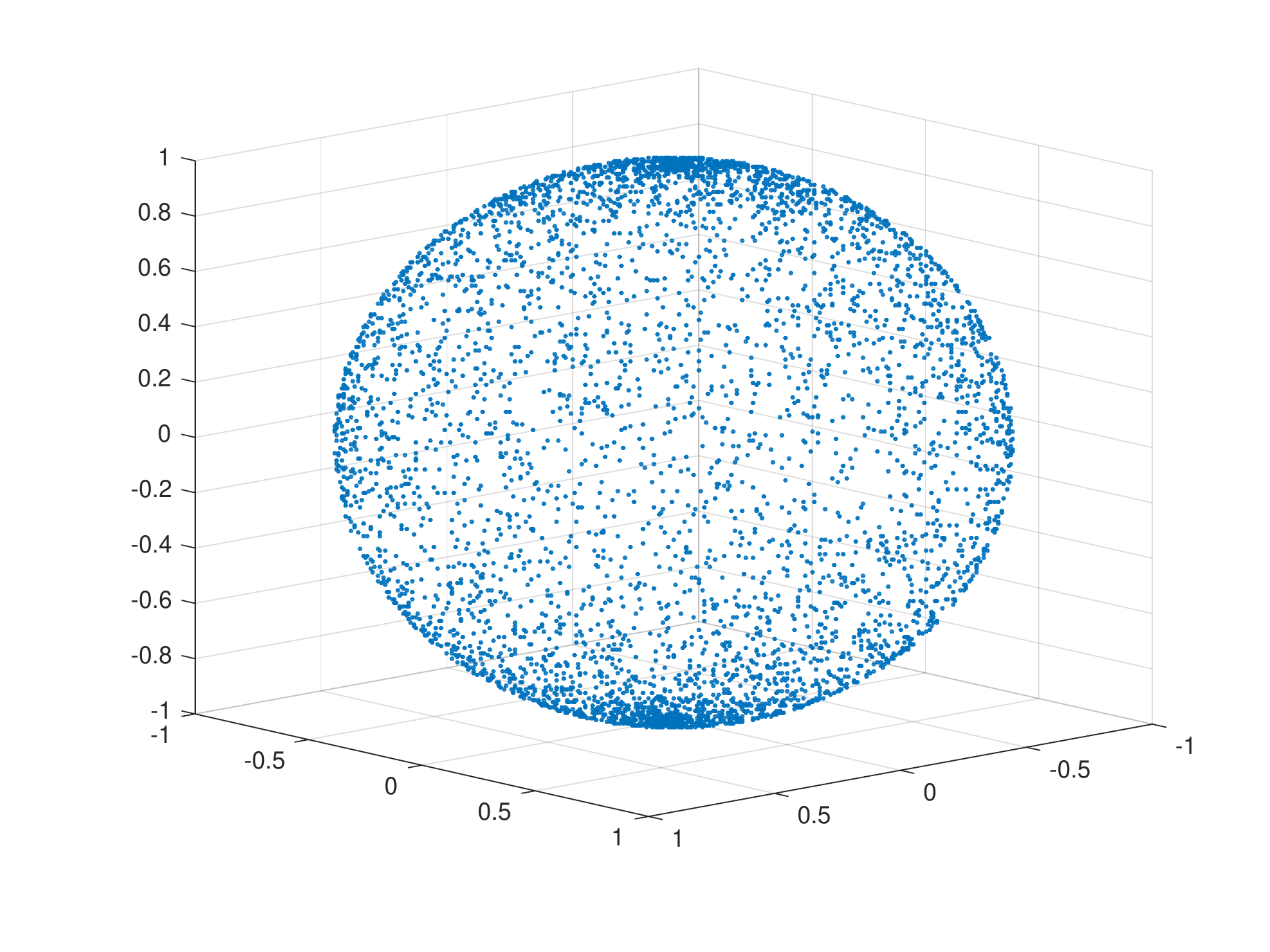}
        \caption{5000 random functions in $\mathbb{R}^3$, generated by uniformly sampling the angles}
        \label{fig:randf1}
\end{figure}

\begin{figure}[!tb]
        \centering
        \includegraphics[width = .45\textwidth]{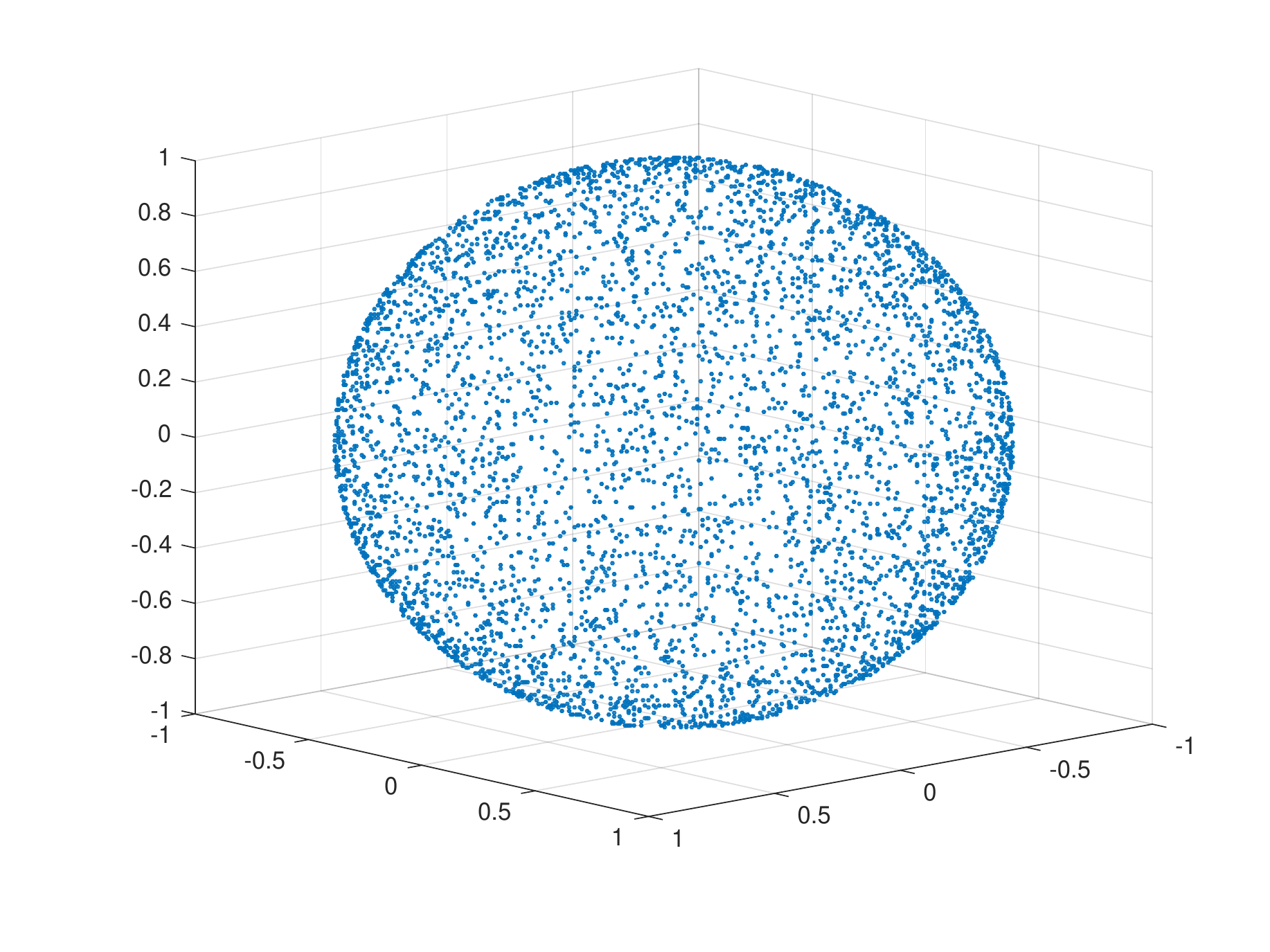}
        \caption{Illustration of 5000 random uniform functions taken in $\mathbb{R}^3$, using Algorithm~\ref{alg:sampu}}
        \label{fig:randf2}
\end{figure}

Unbiased sampling from the function space is the crucial step in 
developing randomized algorithms for the responsible scoring function design.
We will elaborate on this with multiple examples in \S~\ref{sec:app}.
In the following, we first discuss sampling from the complete function space and then propose an efficient sampler for \ar.

\subsection{Sampling from the entire function space}\label{subsec:sampleu}
Recall that every scoring function is identified as a vector of $d-1$ angles, one way of generating random functions is by generating angle vectors uniformly at random. This, however, as we shall show in the following, does not provide uniform random functions sampled from the function space, except for 2D.
First, let us propose Theorem~\ref{th:1} that establishes a key connection between sampling from the function space and sampling from the surface of unit d-sphere ($d$ dimensional hyper-sphere).

\begin{theorem}\label{th:1}
Uniform sampling of the point on the surface of the unit d-sphere provide uniform samples from the function space.
\end{theorem}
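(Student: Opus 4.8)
The plan is to pin down what ``uniform over the function space'' has to mean, and then show that under the natural identification of functions with directions it coincides with the uniform surface measure on the unit sphere. First I would recall the geometric picture of $\S$\ref{sec:pre}: rescaling a weight vector $\vec{w}$ by any positive constant rescales every score by that same factor and hence leaves every induced ranking, and every linear classification, unchanged, so a scoring function is determined by the \emph{direction} of $\vec{w}$ alone. Thus the function space \fspace\ is in one-to-one correspondence with the set of origin-anchored rays in $\mathbb{R}^d$, i.e. with the unit sphere $S^{d-1}=\{\vec{v}\in\mathbb{R}^d : \|\vec{v}\|_2=1\}$, through the normalization map $\vec{w}\mapsto\vec{w}/\|\vec{w}\|_2$ (restricted to the non-negative orthant of the sphere when the weights are required to be non-negative; this does not change anything below).

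Next I would argue that the only unbiased notion of ``uniform'' on \fspace\ is the one that privileges no direction over any other: a probability measure on the space of directions that is invariant under rotations. The subtlety is that \fspace\ has no canonical coordinate system, so ``constant density'' is meaningless until a chart is fixed --- and putting constant density on the $(d-1)$ polar angles is precisely the \emph{biased} choice the paper warns against, since the factor relating that chart to surface area on $S^{d-1}$ is a non-constant product of powers of $\sin\theta_i$ as soon as $d>2$. What \emph{is} canonical is the rotational symmetry, and by the uniqueness of the rotation-invariant probability measure on the sphere this requirement singles out the uniform function-space measure uniquely: it is the pushforward of the normalized surface measure of $S^{d-1}$ through the normalization bijection. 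Because that bijection carries one measure onto the other, drawing a point uniformly from the surface of the unit sphere and returning its origin-anchored ray produces a function with exactly the uniform function-space distribution --- which is the statement --- and, as a bonus, reduces the sampling task to the well-studied problem of sampling a uniform point on $S^{d-1}$.

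The main obstacle I anticipate is conceptual rather than computational: the theorem as stated does not name a reference measure on \fspace, so the crux is the justification in the second step that ``unbiased'' forces rotation-invariance; once this is granted, the result follows from the scale-invariance of scoring functions and the textbook uniqueness of the uniform measure on a sphere. A consistency check is the case $d=2$, where the sine-power factor is identically $1$ and uniform-angle sampling and uniform-sphere sampling coincide, matching the paper's remark that the naive angle method is correct only in two dimensions.
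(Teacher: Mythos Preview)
Your argument is correct, but it reaches the conclusion by a different route than the paper. Both proofs start the same way, with the one-to-one correspondence between scoring functions (origin-anchored rays) and points on the unit $d$-sphere. For the second step, however, the paper does not invoke rotation-invariance or the uniqueness of the invariant measure. Instead it works with infinitesimal Riemann partitions: it partitions the function space into equi-angle $d$-cones (each given by a central ray and a common aperture $d\varphi$), takes ``unbiased'' to mean that every such cone is sampled with equal probability, and then observes that these cones meet the unit sphere in equi-area spherical sectors, so uniform surface sampling hits every cone equally often. Your approach is cleaner and more rigorous---you name the ambiguity in what ``uniform on \fspace'' should mean and resolve it by a symmetry principle plus a standard uniqueness theorem---at the price of importing that theorem as a black box. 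The paper's approach is more elementary and self-contained, effectively \emph{defining} uniformity on \fspace via equal solid angle and checking that surface area realizes it, but the ``equi-area sectors $\Leftrightarrow$ equi-angle cones'' step is asserted rather than carefully derived. Both arguments land in the same place and both correctly predict the $d=2$ coincidence (and its failure for $d>2$) that you flag as a consistency check.
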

\begin{proof}
First, we note that there is a 1-1 mapping between the function space \fspace and the points on the surface of the unit d-sphere.
Every function  in \fspace is represented by an origin-anchored ray.
Every such ray $\theta = \{ \theta_1, \cdots,\theta_{d-1}\}$ passes through the point $\langle 1,\theta\rangle$ on the surface of the d-sphere.
Similarly, for every point $p=\langle 1,\theta\rangle$ there is only one scoring function, identified by the ray $\theta$ that passes through $p$.
As a result, using the points on the surface of the d-sphere to represent the scoring functions, sampling from the surface of the unit d-sphere samples the functions in \fspace.

Next, we need to show that such a sampling provides unbiased function sampling.
Consider the partitioning of the space of functions (origin-anchored rays) into Riemann d-cones.
Each cone is defined as a ray (passing through its center) $\rho_i$ and an angle $d\varphi$ around it. An unbiased sampler, should sample each of the cones with equal probability.
Now, consider the partitioning of the d-sphere into Riemann d-spherical sectors where all sectors have equal surface areas.
An unbiased sampler from the surface of the unit d-sphere samples the sectors with equal probabilities.
Each sector $s_i$ is identified by an origin-anchored ray (passing through the center of the sector) and an angle $d\varphi_i$. Because all sectors have equal areas, for two arbitrary sectors $s_i$ and $s_j$, $d\varphi_i = d\varphi_j = d\varphi$.
That is, these equi-area sectors are identified as the intersection of the equi-angle Riemann d-cones with the unit d-sphere.
As a result, since the sampler samples the sectors with equal probabilities, it samples the Riemann d-cones with equal probabilities.
Therefore, 
it samples scoring functions with equal probabilities. I.e., it is an unbiased sampler for scoring functions.
\end{proof}

We use the 1-1 mapping in Theorem~\ref{th:1} to demonstrate, in 3D, that
sampling functions by uniformly sampling the angles is not unbiased.
To do so, we 
generated a set of 5K samples, using this method.
The results are provided as plotted as the points on the surface of unit sphere in Figure~\ref{fig:randf1}.
Looking at the figure, it is easy to see that the distribution is not uniform, as the 
density of the end points reduces moving from the top and bottom to the middle.

Based on Theorem~\ref{th:1}, in order to generate unbiased samples from the function space, it is enough to sample (uniformly at random) from the surface of the d-sphere.
Hence, the problem of choosing functions uniformly at random from \fspace is equivalent to choosing random points from the surface of a $d$-sphere.
We \cite{asudeh2019designing,asudeh2018obtaining,asudeh2019rrr} adopt a method for uniform sampling of the points on the surface of the unit d-sphere~\cite{muller1959note, marsaglia1972choosing}.
Rather than sampling the angles, this method samples the weights using the {\em Normal distribution}, and normalizes them. 
This method works because the normal distribution function has a constant probability on the surfaces of d-spheres with common centers~\cite{marsaglia1972choosing,cramer2016mathematical}.
Algorithm~\ref{alg:sampu}, adopt this method to generate random functions from \fspace. 

\vspace{3mm}
\begin{algorithm}[!ht]
\caption{{\bf Sample\fspace} }
\begin{algorithmic}[1]
\label{alg:sampu}
    \FOR {$i=1$ to $d$} 
        \STATE $w_i = \mathcal{N}(0,1)$ \scriptsize{\tt // standard normal distribution}
    \ENDFOR
    \STATE {\bf return} $w/|w|$
\end{algorithmic}
\end{algorithm}

To demonstrate the uniformity of Sample\fspace, we used it to draw 5000 sample functions. Similar to Figure~\ref{fig:randf1}, we plotted the samples in Figure~\ref{fig:randf2}. The points are uniformly distributed in this figure.

\begin{figure}[!tb]
        \centering
        \includegraphics[width = 0.3\textwidth]{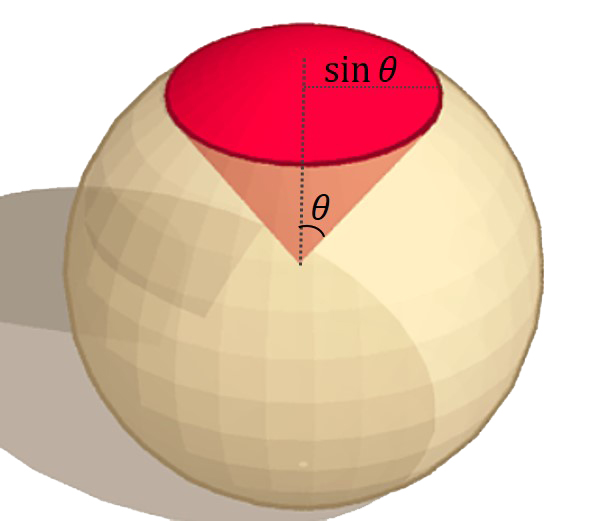}
        \caption{Modeling \ar as a unit $d$-spherical cap around the $d$-th axis}
        \label{fig:roi}
\end{figure}

\subsection{Sampling from a region of interest}\label{subsec:sampleui}

While sometimes, we require to sample from the (complete) function space, often it is the case that we want to limit the samples to the samples to the neighborhood of a given function. 
We define a region of interest \ar as the set of functions with minimum cosine similarity of at least $\cos(\theta)$ with the reference function $f$ (maximum angular distance of $\theta$ to the ray $\rho$ of $f$).
Our goal is design a sampler that: {\em given a region of interest \ar, generates uniform random samples from it}.

Given the unbiased sampler for the function space \fspace (Algorithm~\ref{alg:sampu}), an acceptance-rejection method~\cite{lucidl1989random} can be used for drawing samples from \ar.
The algorithm is straightforward: (i) draw a sample from the function space using Algorithm~\ref{alg:sampu}; (ii) if the drawn sample satisfies the cosine similarity constraint accept it, otherwise reject the sample and try again.

The major drawback of acceptance-rejection algorithms is that their efficiency depends on  on the acceptance probability $p$. Here $p$ is the volume ratio of \ar to \fspace. That is, the expected number of trials for drawing a sample for such probability is $1/p$.
Hence, this method is efficient if the volume of \ar is not small.

Therefore,
in the following, we alternatively propose an inverse CDF (cumulative distribution function) method~\cite{devroye1986sample} for generating random uniform functions from a region of interest. This method is preferred over the acceptance-rejection method when \ar has a small volume.

In order to design the sampler, following Theorem~\ref{th:1}, we model \ar as the surface unit $d$-spherical cap with angle $\theta$ around the d-th axis in $\mathbb{R}^d$ (Figure~\ref{fig:roi}).
This can be done using a rotation that maps the ray $\rho$ of $f$ to the d-th axis.
As we shall later show, after drawing a function sample, we will rotate the space back around $\rho$.

For an angle $\theta$, the plane $x_d=\cos \theta$ partitions the cap from the rest of the $d$-sphere.
Hence, the intersection of the set of the planes $\cos \theta\leq x_d\leq 1$ with the $d$-sphere define the cap.
The intersection of each such plane with the $d$-sphere is a $(d-1)$-sphere. For example, in Figure~\ref{fig:roi} the intersection of a plane, orthogonal to the z-axis, with the unit sphere is a circle (2-sphere).

At a high-level, in order to sample functions, we sample the points from the surface of such $(d-1)$-spheres proportionally to their areas, as explained in the following.
The surface area of a $\delta$-sphere with the radius $r$ is~\cite{li2011concise}:
\begin{align}\label{eq:area1}
A_\delta (r) = \frac{2\pi^{\delta/2}}{\Gamma(\delta/2)}r^{\delta-1}
\end{align}
where $\Gamma$ is the gamma function.

Using this equation, the area of the unit $d$-spherical cap can be stated as the integral over the surface areas of the $(d-1)$-spheres, 
defined by
the intersection of the planes $\cos \theta\leq x_d\leq 1$ with the $d$-sphere, as follows~\cite{li2011concise}:

\begin{align}\label{eq:area2}
A_d^{cap} (1) &= \int_0^\theta A_{d-1}\sin\phi d\phi
              = \frac{2\pi^{d/2}}{\Gamma(d/2)} \int_0^\theta \sin^{d-2}(\phi)d\phi
\end{align}

Therefore, considering the random angle $0 \leq x\leq \theta$, the cumulative density function (cdf) for $x$ is given by:

\begin{align}\label{eq:cdf1}
F(x) &= \frac{\int_0^x \sin^{d-2}(\phi)d\phi}{\int_0^\theta \sin^{d-2}(\phi)d\phi}
\end{align}

For a specific value of $d$, one can solve Equation~\ref{eq:cdf1}, find the inverse of $F$ and use it for sampling.
For instance, for $d=3$:
\begin{align}\label{eq:cdf3d}
F(x) = \frac{1-\cos x}{1-\cos\theta}
\Rightarrow F^{-1}(x) = \arccos \big(1-(1-\cos\theta)x\big)
\end{align}

For a general $d$, we can use the representation of $\int_0^\theta \sin^{d-2}(\phi)d\phi$ in the form of beta function and regularized incomplete beta function~\cite{li2011concise} and rewrite Equation~\ref{eq:cdf1} as\footnote{\small $I_z(\alpha, \beta)$ is the regularized incomplete beta function.}:
\begin{align}\label{eq:cdf}
F(x) &= \frac{I_{\sin^2(x)}\big(\frac{d-1}{2}, \frac{1}{2}\big)}{I_{\sin^2(\theta)}\big(\frac{d-1}{2}, \frac{1}{2}\big)}
\end{align}
However, since numeric methods are applied for finding the inverse of the regularized incomplete beta function~\cite{cran1977remark}, we consider a numeric solution for Equation~\ref{eq:cdf1}.
Consider a regular partition of the interval $[0,\theta]$ to its Riemann pieces. The integral $\int_0^\theta \sin^{d-2}(\phi)d\phi$ can be computed as the sequence of Riemann sums over the partitions of the interval.
We apply this for computing both the denominator and the nominator of Equation~\ref{eq:cdf1}.
Given the partition of the interval, we start from the angle 0, and for each partition $x'$, compute the value of $\int_0^{x'} \sin^{d-2}(\phi)d\phi$ as the aggregate over the previous summations and store it in a sorted list.
As a result, in addition to the value of the denominator, we have the value of $F$ for each of the partitions. We will later apply binary search on this list, in order to find the angle $x$ that has the area $F(x)$.
Algorithm~\ref{alg:riemann} shows the pseudocode of the function {\it RiemannSums} that computes the denominator and returns the list of partial integrals divided by the denominator.
In addition to the angle $\theta$, the function takes the number of partitions as the input.

\begin{algorithm}[!h]
\caption{{\bf RiemannSums}\\
{\bf Input:} The angle $\theta$ and number of partitions $\gamma$
}
\begin{algorithmic}[1]
\label{alg:riemann}
    \STATE $\epsilon = \theta/\gamma$
    \STATE $L=[0]$; $A=0$; $\alpha= \epsilon$
    \FOR{$i = 1$ to $\gamma$}
        \STATE $A= A+\sin^{d-2}(\alpha)$
        \STATE $L.append(A)$
        \STATE $\alpha = \alpha + \epsilon$
    \ENDFOR
    \STATE {\bf for} $i=1$ to $\gamma$ {\bf do} $L[i] = L[i]/A$
    \STATE {\bf return} $L$
\end{algorithmic}
\end{algorithm}

Algorithm~\ref{alg:sampui} shows the pseudocode of the inverse CDF sampler.
As an example, consider the case in $\mathbb{R}^3$ where the objective is to generate random numbers around the ray $(\pi/6,\pi/4)$ with angle $\theta=\pi/20$.
The algorithm starts by drawing a random uniform number in range $[0,1]$. 
Let such a random number be 0.13.
It takes the list $L$ (computed using the function RiemannSums) as the input and draws a random function from \ar.
To do so, it first draws a random uniform number $y$ in the range [0,1].
Next, it applies a binary search on the list of partial integrals the index $i$ where $F(x_i)=y$.
Considering a fine granularity of the partitions, we assume that the areas of all $(d-1)$-spheres inside each partition are equal. Hence, the algorithm selects a point in the partition (Line 3 of the algorithm) uniformly at random.
Obviously, instead, the algorithm can use the equation of the inverse function.
Continuing with our example, while using Equation~\ref{eq:cdf3d}, the corresponding $y$ value for 0.13 is $\pi/55.5$.

\begin{algorithm}[!h]
\caption{{\bf Sample \ar}\\
{\bf Input:} The ray $\rho$, angle $\theta$, and the list $L$}
\begin{algorithmic}[1]
\label{alg:sampui}
    \STATE $y = U[0,1]$ {\scriptsize \tt // draw a uniform sample in range [0,1]}
    \STATE $i =$ {\bf binarySearch}$(y,L)$
    \STATE $x = i\times \epsilon +U[0,\epsilon]$ {\scriptsize \tt // add a small noise}
    \STATE {\bf for} $i=1$ to $d-1$ {\bf do} $\hat{w}_i = \mathcal{N}(0,1)$
    \STATE $\langle \theta_1,\cdots,\theta_{d-2} \rangle = $ the angles in polar representation of $\hat{w}$
    \STATE $w = $ {\bf toCartesian}$(1, \langle \theta_1,\cdots , \theta_{d-2},x \rangle)$
    \STATE {\bf return Rotate}($w$, $\rho$)
\end{algorithmic}
\end{algorithm}

Recall that the angle $x$ specifies the intersection of a plane with the $d$-spherical cap, which is a $(d-1)$-sphere.
Hence, after finding the angle $x$, we need to sample from the surface of a $(d-1)$-sphere, uniformly at random.
For our example in $\mathbb{R}^3$, the intersection is a circle (2-sphere) and, therefore, we need to sample from the surface of the circle.
Also, recall from \S~\ref{subsec:sampleu} that the normalized set of $d-1$ random numbers drawn from the normal distribution provide a random sample point on the surface of $(d-1)$-sphere.
The algorithm Sample\ar uses this for generating such a random point.
It uses the angle combination of the drawn random point from the surface of a $(d-1)$-sphere and combines them with the angle $x$ (with the $d$-th axis).
In our example in $\mathbb{R}^3$, let the sampled point on the circle have the angle $0.8\pi$. Hence, the angle combination is $\langle 0.8\pi, \pi/55.5\rangle$.
After this step, the point specified by the polar coordinates $(1, \langle \theta_1,\cdots , \theta_{d-2},x \rangle)$ is the random uniform point from the surface of $d$-spherical cap around the $d$-th axis.
As the final step, the algorithm needs to rotate the coordinate system such that the center of the cap (currently on $d$-th axis) falls on the ray $\rho$.
We rely on the existence of the function {\it Rotate} for this, explained in \S~\ref{sec:rotation}.

\begin{figure}[!tb]
        \centering
        \includegraphics[width = 0.5\textwidth]{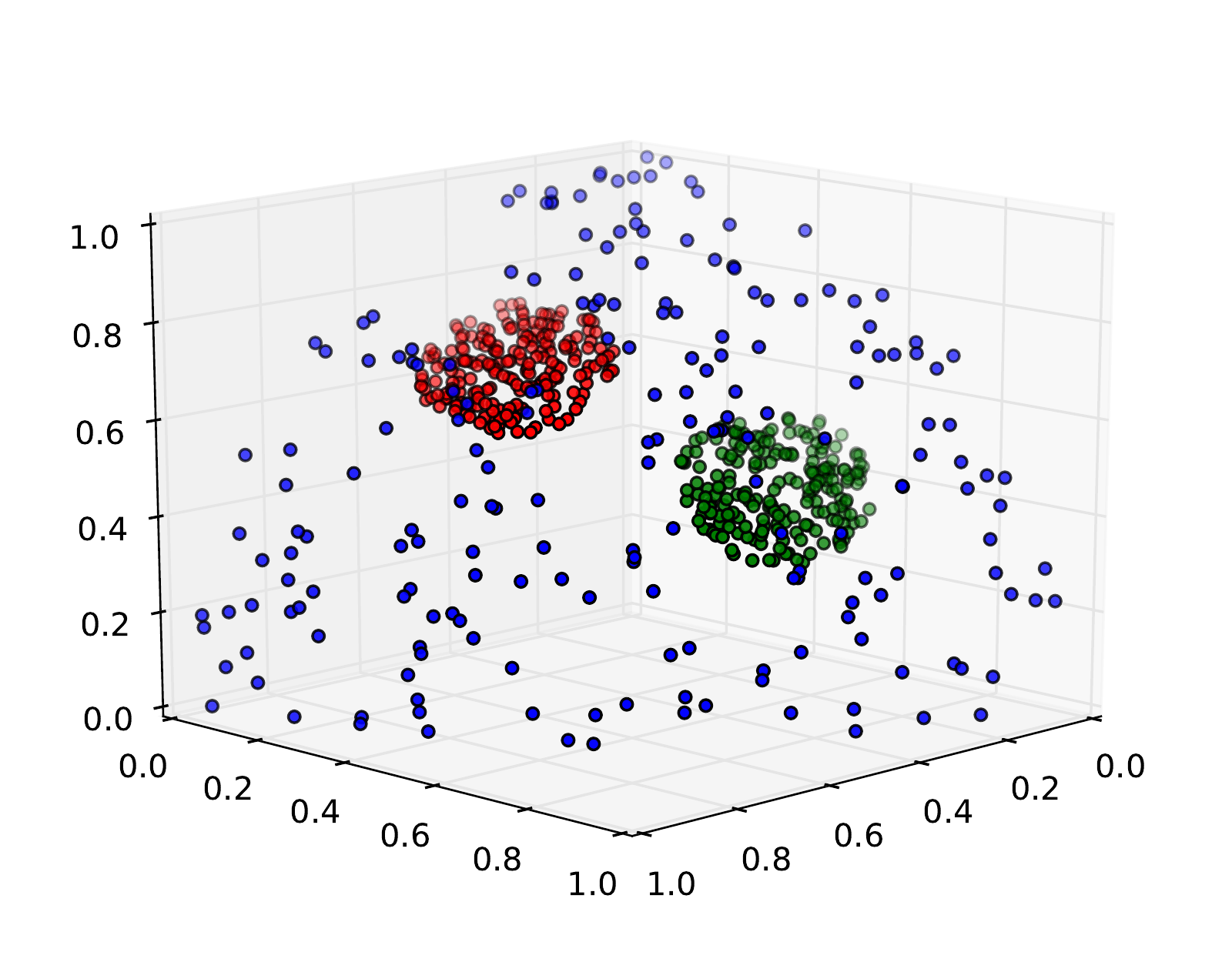}
        \caption{Samples generated using (i) blue: Algorithm~\ref{alg:sampu}, (ii) green: Algorithm~\ref{alg:sampui} and Algorithm~\ref{alg:riemann}, (iii) red: Algorithm~\ref{alg:sampui} and Equation~\ref{eq:cdf3d}}
        \label{fig:roi2}
\end{figure}

Figure~\ref{fig:roi2} shows three cases of 200 samples in $\mathbb{R}^3$ where (i) the blue points (scattered over the space) are sampled from the first quadrant of function space using Algorithm~\ref{alg:sampu}, (ii) green points (the right cluster) are generated around the ray $(\pi/3,\pi/3)$ with angle $\pi/20$ using Algorithm~\ref{alg:sampui} and Algorithm~\ref{alg:riemann}, and (iii) red points (the left cluster) are generated around the ray $(\pi/6,\pi/4)$ with angle $\pi/20$ using Algorithm~\ref{alg:sampui} while using Equation~\ref{eq:cdf3d} for the inverse CDF.


\vspace{2mm}
\subsubsection{Coordinate system rotation}~\label{sec:rotation}
In order to generate random functions in \ar, Algorithm~\ref{alg:sampui} models the region of interest as a $d$-spherical cap around the $d$-th axis.
Therefore, after picking a random vector $w$, it needs to {\em rotate} the space such that the $d$-th axis gets aligned on the input vector $\rho$. This moves the drawn sample to the region of interest.
We do the rotation, using a so-called ``transformation matrix''~\cite{baker2012matrix}. 
Having such a $d\times d$ rotation matrix $M$, the result of rotation on a vector $w$ is a vector $w'$, generated as $w' = Mw$.
For example in $\mathbb{R}^2$, the following matrix rotates the coordinate system counterclockwise to an angle of $\theta$:
 \[
   M=
  \left[ {\begin{array}{cc}
   \cos \theta & -\sin\theta \\
   \sin\theta & \cos\theta \\
  \end{array} } \right]
\]

We use this matrix for deriving the rotation matrix we are looking for.
The idea is that instead of applying the the rotation at once, we can do the rotation on axes separately. For example, for $\mathbb{R}^3$, we first can fix the z-axis and do the rotation on the x-y plane and then fix the y-axis and do the rotation on the x-z plane.

The $d$ by $d$ matrix $M_i$, specified in Equation~\ref{eq:mi}, rotates the coordinate system on the $x_1$-$x_{i+1}$ plane counterclockwise to an angle of $\rho_i$. All the values in $M$ except the diameter, $M[1,i+1]$, and $M[i+1,1]$ are zero. Also, all the values on the diameter, except $M[1,1]$ and $M[i+1,i+1]$ are one.

\begin{align}\label{eq:mi}
M_i = 
\begin{blockarray}{ccccccc}
1 & 2& \cdots & i+1&\cdots& d \\
\begin{block}{(cccccc)c}
  \cos\rho_i & 0 & \cdots & -\sin\rho_i &\cdots & 0& 1 \\
  0 & 1 & \cdots & 0 &\cdots & 0 & 2 \\
  \vdots & \vdots & \ddots & \vdots &\vdots & \vdots & \vdots \\
  \sin\rho_i & 0 & \cdots & \cos\rho_i &\cdots & 0 & i+1 \\
  \vdots & \vdots & \vdots & \vdots &\ddots & \cdots & \vdots \\
  0 & 0 & \cdots & 0 &\cdots & 1 & d \\
\end{block}
\end{blockarray}
 \end{align}

The last point is that in order to rotate the $d$-th axis to $\rho$, on all $x_1$-$x_{i+1}$ planes except the last one the rotations are counterclockwise, while for the $x_1$-$x_d$ plane the rotation is clockwise. We change $\rho_{d-1}$ to $(\pi/2 - \rho_{d-1})$ to make the last rotation also counterclockwise.

\vspace{3mm}
\begin{algorithm}[!h]
\caption{{\bf Rotate} \\
		 {\bf Input:} vector $w$ and ray $\rho$ (in form of $d-1$ angles)\\
		 {\bf Output:} vector $w'$
		}
\begin{algorithmic}[1]
\label{alg:rotate}
	\STATE $w' = w$
    \STATE $\rho_{d-1} = \pi/2 - \rho_{d-1}$
    \FOR{$i=d-1$ down to $1$}
    	\STATE compute $M_i$, using Equation~\ref{eq:mi}
    	\STATE $w' = M_i\times w'$
    \ENDFOR
    \STATE {\bf return} $w'$
\end{algorithmic}
\end{algorithm}

\section{Application Demonstration}\label{sec:app}
Following the geometric interpretation in \S~\ref{sec:pre}, due to the complexity of function space, exact algorithms are usually not scalable.
Recall that the intersection of the dual hyperplanes of tuples with a ray of function $f$ determines scores based on $f$.
For ranking, the ordering of the intersections identify the ranking.
For classification, which side of the border-point a dual hyperplanes cuts the ray of the function shows the class label.
In both of these cases the function space gets partitioned, by an arrangement~\cite{edelsbrunner} of a set of hyperplanes.
As a result, the exact algorithms for exploring the function space is cursed by the number of dimensions, i.e., exponentially depend on the number of scoring attributes.

Fortunately, 
unlike inefficiency of the exact algorithms, efficient and effective randomized algorithms can be designed.
The function sampling introduced in \S~\ref{sec:sampler}, provides a powerful tool for designing randomized approximation algorithms such as Monte-carlo estimation.
In the following, we briefly discuss three diverse applications in the context of ranking.

First in \S~\ref{sec:fairness}, we will use function sampling to {\em design} a fair scoring function.
Our proposal is a tool for assisting scoring function designer, the {\em human-in-the-loop}.
Next in \S~\ref{sec:hardness}, we will use our sampler to {\em evaluate the fitness} of a dataset for designing fair rankings.
Finally, in \S~\ref{sec:stability}, we will show how to use function sampling for {\em audit} if a scoring function has been (intentionally or unintentionally) cherry-picked for ranking and to obtain the most reliable (stable) ranking in the vicinity of the given scoring function.



\subsection{Assisting human experts to achieve fairness by design in ranking}\label{sec:fairness}
Fairness of decision systems has been receiving tremendous attention.
We interpret fairness to mean that (a) disparate impact, which may arise as a result of historical discrimination, needs to be mitigated; and yet (b) disparate treatment cannot be exercised to mitigate disparate impact when the decision system is deployed.
Following these, \cite{asudeh2019designing} proposes a system that helps human expert to design a fair scoring function, such that the ranking based on it 
{\em both} mitigate disparate impact {\em and} do not exercise disparate treatment (by not explicitly using information about an individual's membership in a protected group) during deployment.
That is, a single scoring function will be used for all items in the dataset, irrespective of their membership in a protected group.
At a high level, the expert first designs an initial scoring function $f$, using her domain knowledge.
The system, then, finds the most similar function $f'$ to $f$ such that its output is ``fair''.
Adopting a general definition, a ranking is considered fair if it satisfies a set of criteria.
For instance, in Example~\ref{example1}, the initial function $f$ has the weight vector $\vec{w}=\langle 1,1\rangle$.
Assume the owner knows that, because of some hidden factors, sales and customer satisfaction patterns are different in Chicago and Detroit. Hence, she considers the selection of the top-3 agents to be fair, if it assigns at least one of the promotions to each one of the offices. Note that according to this criterion, the ranking provided by $f=x_1+x_2$ is not fair as it assigns all three promotions to the agents in Chicago.
On the other hand the ranking generated by function $f'=1.1x_1+.9x_2$ assigns two of the promotions to Chicago and one to Detroit, and hence is considered to be fair. 
In this example, given the initial function $f$, the system returns the most similar function $f''$ to $f$ that assigns at least one of the promotions to each of the offices.

In Figure~\ref{fig:toyexample:dual}, 
consider the intersection of two dual lines, for example, $\mathsf{d}(t_1)$ and $\mathsf{d}(t_2)$ and the origin-anchored function passing through this intersection. Every function in the top-left of this ray ranks $t_1$ higher than $t_2$, while for every function in its bottom-right $t_2$ is ranked higher. Referring to this intersection as the ordering-exchange between the two tuples, the function space partitions into regions such that all functions in a region produce the same ranking.
In higher dimensions, the hyper-plane $\sum_{k=1}^d(t_i[k]-t_j[k])~x_k=0$ identifies the ordering-exchange between $t_i$ and $t_j$.
In ordering to find satisfactory functions (that generate fair rankings)
\cite{asudeh2019designing} reprocess the function space to identify satisfactory regions.
Unfortunately, finding the satisfactory regions require the construction of arrangement~\cite{edelsbrunner} of hyperplanes, which has a exponential complexity to the number of dimensions.
Several approximations and heuristics are proposed to solve the problem, which still don't scale beyond thousands of items.

The function sampling proposed in \S~\ref{sec:sampler} enables a practical solution for the problem, in particular for on-the-fly query answering.
Given a region of interest, identified by the initial function of the user and an angle $\theta$, we can use function samples for discovering a (function in a) satisfactory region.

The idea is to take an unbiased function sample from the region of interest, check if the ranking generated by it satisfies the fairness criteria, and if so return it to the user, otherwise take another sample and continue\footnote{We demonstrate using this technique for finding fair and stable rankings in \cite{guan2019mithraranking}.}.
Let $s$ be the sampling budget. For every sample, generating the ranking is in $O(n\log n)$. Therefore, the algorithm is in $O(s~n\log n)$. Also, since the samples are taken from the $\theta$-vicinity of the user input, given that the optimal function has an angle distance $\epsilon\geq 0$ from the user input, the output of this algorithm (if it can find a satisfactory function) is within an additive approximation $\theta$ of the optimal solution.

Using $s$ as the sampling budget, the algorithm is expected to discover a satisfactory function if the volume ratio of the satisfactory regions to the volume of the region of interest is more than $1/s$.
We note that this strategy is not effective for the discovery of very small satisfactory regions. Still, as we shall discuss in \S~\ref{sec:stability} the rankings produced in these regions are not reliable as small changes in the weight vector changes the outcome.

\subsection{Evaluating the fitness of a dataset of fair ranking}\label{sec:hardness}
In previous section, we explained, given a dataset, how to design scoring function such that their rankings are fair (satisfy some fairness constraints).
It however is not possible to find satisfactory functions.
If the disparity in the input data is high or the fairness constraints are too restrictive, it may even be impossible to find such a function.

Consider an affirmative action (and implicit bias) scenario where a company board is worried that too few employees of a particular race are being hired (disparate impact) through applicant screening software.
A first response may be to have a public relations statement about there being too few qualified applicants of that race.
A more in-depth analysis may show that each applicant has some value for each of 50 attributes considered by the software, and it is not the case that applicants of any race are dominated in all desired attributes.  However, the attributes in which members of the protected race tend to score more highly are given less weight than those on which they tend to score less well. This is bias in the screening software, which could be due to historical artifacts in the training data, implicit bias of the algorithm designer, or some other cause.

Our goal in this section is to assess {\em how ``hard''} it is to design a scoring function such that its output is fair.
We wish to distinguish between the case where members of the protected race are truly dominated from the case where their desired attributes are undervalued.
In the following, first we introduce a measure\footnote{We would like to emphasize that even though we use ranking for application demonstration, 
our proposed metrics and sampling-based solutions can be adopted for classification.} to quantify the hardness. We then use function sampling to develop a Monte-Carlo method~\cite{montecarlo} that computes the measure efficiently.

Given the need for measures that show the hardness of fairness, for different algorithmic tasks, we do not limit our definitions to the scoring functions, but rather define them over the large set of algorithmic tasks with a collection of valid settings.
While setting up an algorithm to get applied on a dataset, each of the valid settings may result in either a fair or unfair output.
The more the number of unfair outputs are, the harder and less likely is to design a meaningful setting that is fair.
Using this observation, we first propose the {\em unfair portion} (\up) measure, Definition~\ref{def:up}, that measure the portion of the parameter settings that result in unfair outputs.

\begin{definition}[Unfair Portion (\up)]\label{def:up}
Consider a dataset $\mathcal{D}$, a fairness criteria $\phi$, and an algorithm with a collection of parameter settings $\mathcal{F}$.
For each setting $f\in\mathcal{F}$, let $\mathcal{A}_f(\mathcal{D})$ be the output of $A$ on $\mathcal{D}$ using $f$.
The unfair portion of $\mathcal{F}$ on $\mathcal{D}$ based on $\phi$ is defined as the ratio of settings $f\in\mathcal{F}$ that produce a fair output based on $\phi$. Formally:
\begin{align}\label{eq:up}
\up_\mathcal{A}(\mathcal{D},\mathcal{F}, \phi) = \frac{|\{f\in\mathcal{F}~|~ sat(\mathcal{A}_f(\mathcal{D}),\phi)=\bot\}|}{|\mathcal{F}|}
\end{align}
\end{definition}

The collection of parameter settings, the denominator of Equation~\ref{eq:up} for linear scoring functions, can either be a discrete {\em work-load} of functions, or a region of interest, or even the complete function space.

As explained in the previous section, the function space is partitioned into a set of ranking regions that each generate a unique ranking. An exact solution for computing \up for linear scoring functions, requires to first find these region (by construction the arrangement of ordering-exchanges), for each region generate the ranking, and finally compute the volume of the satisfactory regions. Note that even computing an exact volume computation in higher dimensions is not practical.

Fortunately, the unbiased function sampler proposed in \S~\ref{sec:sampler} enable designing an efficient and accurate Monte-Carlo estimation for the problem.
Monte-Carlo methods work based on repeated sampling and the central limit theorem in order to solve deterministic problems. We consider using these algorithms for numeric integration.
Based on the law of large numbers~\cite{durrett2010probability},
the mean of independent random variables can be used for approximating the integrals.

The algorithm is designed based on the fact that the probability that a sampled function, drawn uniformly at random from a region of interest, hits each ranking region proportional to the volume of the region. 
We use this for designing the Monte-Carlo estimator.

Let $\chi_{\bar{\Phi}}$ be the Bernoulli variable that is 1 if the output of an function does not satisfy the fairness constraints and is $0$ otherwise.
Also, let $R_{\bar{\Phi}}$ be the union of satisfactory regions in a region of interest \ar.
For a sampled function, drawn uniformly at random from \ar, the probability mass function (pdf) of $\chi_{\bar{\Phi}}$ is defined as:

\begin{align}\label{eq:functiondist}
p(\chi_{\bar{\Phi}})=\begin{cases}
\frac{vol(R_{\bar{\Phi}})}{vol(\mathcal{F}^*)}  &\chi_{\bar{\Phi}}=1 \\
1 - \frac{vol(R_{\bar{\Phi}})}{vol(\mathcal{F}^*)} &\chi_{\bar{\Phi}}=0
\end{cases}
\end{align}

Using Definition~\ref{def:up}, Equation~\ref{eq:functiondist} can be rewritten as:

\begin{align}\label{eq:functiondist-up}
p(\chi_{\bar{\Phi}})=\begin{cases}
\up &\chi_{\bar{\Phi}}=1 \\
1 - \up &\chi_{\bar{\Phi}}=0
\end{cases}
\end{align}
Since $\chi_{\bar{\Phi}}$ is a Bernoulli variable, its mean and standard deviation are $\mu_{\chi_{\bar{\Phi}}} = \up$ and $\sigma_{\chi_{\bar{\Phi}}} =\up/(1-\up)$, respectively.

Let $\bar{\chi}$ be the average of $\chi_{\bar{\Phi}}$ over $s$ samples.
Based on the central limit theorem, $\bar{\chi}$ follows the Gaussian distribution N$\big(\mu_{\chi_{\bar{\Phi}}}, \frac{\sigma_{\chi_{\bar{\Phi}}} }{\sqrt{s}}\big)$ -- the Gaussian distribution with the mean $\mu_{\chi_{\bar{\Phi}}} = \up$ and standard deviation $\frac{\sigma_{\chi_{\bar{\Phi}}}}{\sqrt{s}}$.

For a confidence level $\alpha$ and error $e$, the confidence range $[\bar{\chi}-e, \bar{\chi}+e]$ is the range, where:
$$p(\bar{\chi}-e\leq \mu_{\chi_{\bar{\Phi}}}\leq \bar{\chi}+e)= 1-\alpha$$

Using the Z-table:
\begin{align} \label{eq:costerror}
\nonumber
e &= Z(1-\frac{\alpha}{2})\frac{\sigma_{\chi_{\bar{\Phi}}}}{\sqrt{s}}\\
  &= Z(1-\frac{\alpha}{2})\sqrt{\frac{ \bar{\chi}(1-\bar{\chi})}{s}}
\end{align}

\begin{algorithm}[H]
\caption{\up \\
         {\bf Input:} The dataset $\mathcal{D}$, fairness constraints $\phi$, region of interest \ar: $\langle \rho, \theta \rangle$, confidence level $\alpha$\\
         {\bf Output:} \up
        }
\begin{algorithmic}[1]
\label{alg:hardness}
    \STATE $cnt = 0$
	\FOR {$k=1$ to $s$} 
    	\STATE $\Vec{w} =$Sample\ar$(\rho, \theta)$
    	\STATE $\Delta = $ Rank $\mathcal{D}$ based on $f_{\vec{w}}$
        \IF{$\Delta$ does not satisfy fairness constraints $\phi$} 
        	\STATE $cnt = cnt+1$
        \ENDIF
    \ENDFOR
    \STATE $\up= \frac{cnt}{s}$
    \STATE e $= Z(1-\frac{\alpha}{2})\sqrt{\frac{\up(1-\up)}{s}}$
    \STATE {\bf return} (\up,e)
\end{algorithmic}
\end{algorithm}

Following the above discussion, Algorithm~\ref{alg:hardness} uses a budget of $s$ function samples and estimates \up.
Using a set of $s$ uniform samples from the region of interest, it counts the number of unsatisfactory functions and computes \up and confidence error as in Equation~\ref{eq:costerror}.

For each function sample, assigning scores to the tuples, ordering them, and judging if the output is fair is in $O(n\log n)$. Therefore, the algorithm is in $O(s~n\log n)$.

\subsection{Auditing Cherry-picked (and Obtaining Stable) Rankings}\label{sec:stability}
After studying the application of function sampling for fairness in ranking, we now show how the function sampler we proposed in \S~\ref{sec:sampler} enables an efficient way for detecting ``cherry-picked'' rankings and also for obtaining ``stable rankings''~\cite{asudeh2018obtaining}.
In Example~\ref{example1}, we observed that small changes in the weight vector of the scoring function can cause a significant change in ranking.
The sensitivity of rankings to the choice of weight has been widely recognized.
For example,  Malcolm Gladwell has nicely described this issue in the context of (highly criticized) university rankings in \cite{Gladwell11Order}.
He argues that score-based rankings are popular for university rankings, because {\em a single score} allows us to easily judge between entities. Yet, he highlights that rankings depend on the weights chosen for scoring function, hence not reliable.

who should be able to assess the reliability of rankings generated by a scoring function $f$. 
If small changes in the weight vector change the ranking, the generated ranking is not reliable.
In other words, if only a small portion of function in a region of interest around $f$ generate the ranking generated by $f$, it may suggest that the ranking was engineered or cherry-picked by the producer to obtain a specific outcome. 
On the other hand, if a ranking is generated by a large region of functions, the ranking is stable, not sensitive to perturbation in the weights, hence reliable for decision making.

Similar to \S~\ref{sec:fairness} and~\ref{sec:hardness}, measuring if a ranking has been cherry-picked, as well as finding stable rankings in a region of interest \ar requires identifying ranking regions, and in addition to compute volumes of the regions.
\cite{asudeh2018obtaining} design a threshold-based arrangement construction that, at any step, concentrates on extending the construction in the largest region, delaying the others.
Even though this algorithm is efficient when there are fairly large (stable) regions in \ar, it fails when all regions have similar volumes. Also, being cursed by dimensionality, the algorithm does not scale beyond a few dimensions.

Despite the complexity of exact algorithms, once again, function sampling provides an efficient and accurate approach both for auditing cherry-picked rankings and for obtaining stable rankings.
Similar to \S~\ref{sec:hardness}, the proposed algorithm is a Monte-Carlo estimation. 
The auditing algorithm is similar to Algorithm~\ref{alg:hardness} that issues $s$ unbiased samples from \ar, and counts the appearance of the ranking produced by the reference function provided by the user.
Also, defining a Bernoulli variable that is 1 if the reference ranking is observed, it computes the confidence error, using Equation~\ref{eq:costerror}.

In order to find the most stable rankings, $s$ function samples from \ar, the algorithm generates the output for each of the sampled functions and, using a hash data structure, maintains the occurrence history of each of the rankings. The ranking that appears the most is the function estimate for the most stable ranking in \ar. 

In order to demonstrate the efficiency and effectiveness of proposed function-sampling based algorithms, in the following we provide two experiment results for auditing cherry-picked rankings and obtaining stable ones~\cite{asudeh2018obtaining}.

\begin{figure}[!tb]
        \centering
        \includegraphics[width=.45\textwidth]{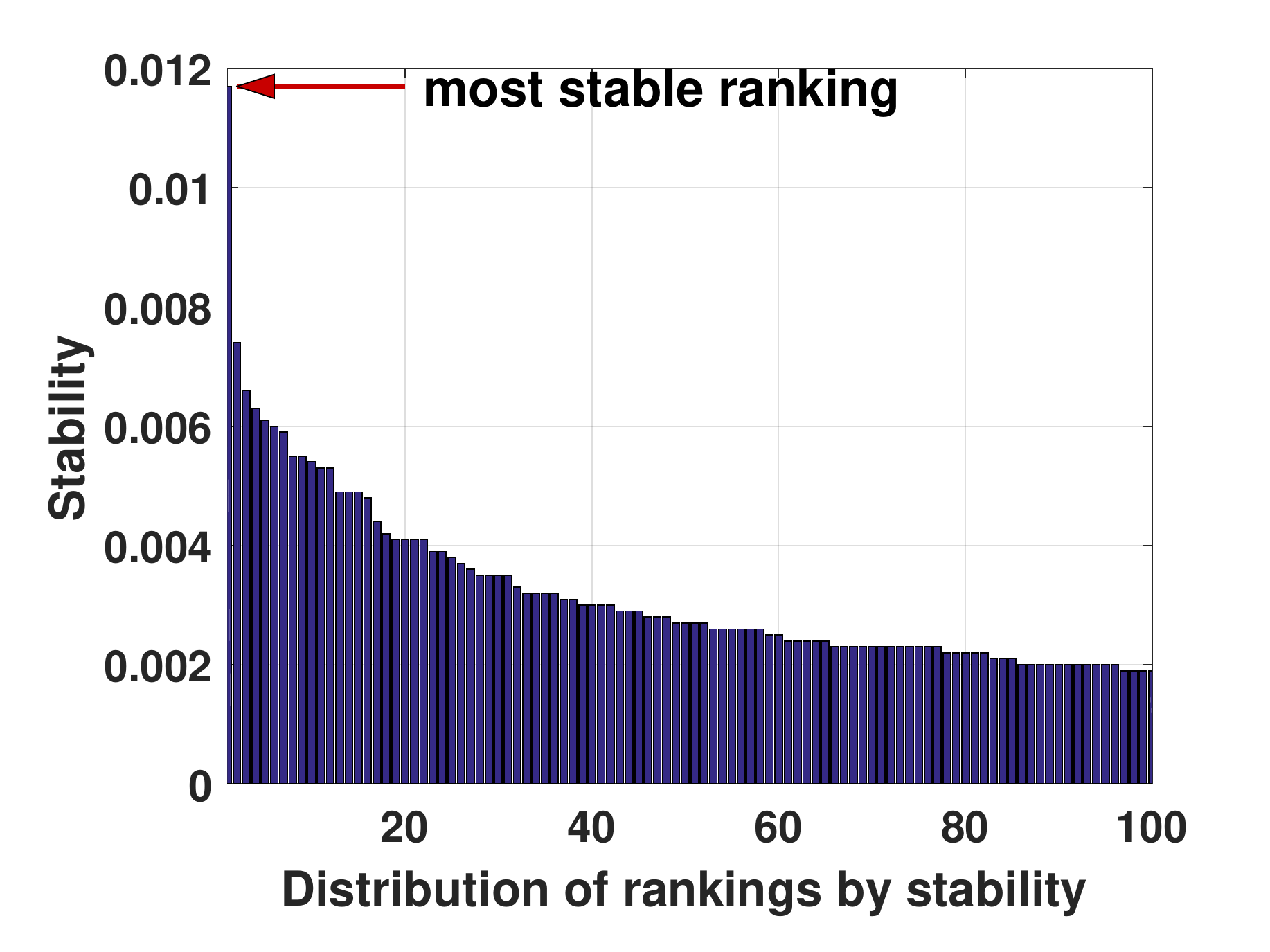}
        \caption{FIFA Rankings: stability around reference vector $\langle 1, 0.5, 0.3, 0.2 \rangle$ with 0.999 cosine sim.}
        \label{fig:fifa1}
\end{figure} 

\subsubsection{Stability in FIFA Rankings}
Many sports use ranking schemes. An example is the FIFA (the international football association) World Ranking of national soccer teams based on recent performance~\cite{fifa1}.
The Rankings are based on a human-designed scoring function where the score of a team $t$ depends on team performance values for $A_1$ (current year), $A_2$ (past year), $A_3$ (two years ago), and $A_4$ (three years ago).  The given score function, from which the reference ranking is derived, is: $t[1] + 0.5 t[2] + 0.3 t[3] + 0.2 t[4]$.
FIFA relies on these rankings for modeling the progress and abilities of the national-A soccer teams~\cite{fifa} and to seed important competitions in different tournaments, including the 2018 FIFA World Cup.
Despite the trust placed by FIFA in these rankings, many critics have questioned their validity.
Therefore, we chose it and used function sampling to audit its stability.
We consider the top 100 teams in our experiment and focus on a narrow region of interest \ar defined by 0.999 cosine similarity ($\theta=\pi/100$) around the reference weights used by FIFA, i.e. $w=\langle 1, 0.5, 0.3, 0.2 \rangle$.
We used function 10,000 unbiased sampling drawn from \ar and produced the top-100 stable rankings around the reference weight vector. Figure~\ref{fig:fifa1} shows the distribution of stable rankings.  
First, one can see that even in such a narrow region, there exists many rankings. Second, there exists a ranking that is relatively much more stable than the others.
Perhaps the most interesting observation is that {\em the reference ranking did not appear in the top-100 stable rankings}.
While FIFA advertises this ranking as ``a reliable measure for comparing the national A-teams''~\cite{fifa}, 
our finding questions FIFA's trust in such an unstable ranking for making important decisions such as seeding the world cup.

\subsubsection{Efficiency advantage of function-sampling for stability}:
In order to study the benefit of function sampling for obtaining stable rankings, we conducted two experiments on a flight dataset from the US Department of Transportation (DoT)~\cite{DoT}. We collected a set of 1,322,023 records, for the flights conducted by the 14 US carriers in the last three months of 2017. We consider the attributes {\tt air-time}, {\tt taxi-in} and {\tt taxi-out} for ranking.
We used the threshold-based algorithm~\cite{asudeh2018obtaining} and function sampling for obtaining the stable rankings, while considering a budget of $5,000$ samples from \ar. We considered the number of attributes $d=3$, the width of the region of interest $\theta=\pi/50$, and $k=10$, that is we studied the stability of the top-k sets, not the entire ranking.
While finding the most stable ranking for the input size of $n=10K$, using the threshold algorithm required almost 3 hours, the randomized algorithm required a few seconds for this setting and a few minutes for $n=100K$.

\section{An Efficient Approximation for Arrangement Construction}\label{sec:arrangement}
Constructing the arrangement of a set of hyper-planes is often a key step in designing algorithms for responsible scoring mechanisms, as we saw in the preceding section.
Unfortunately, due to the exponential complexity of the arrangement to the number of scoring attributes, the algorithms based on arrangement construction are not efficient.
In this section we propose an efficient approximation algorithm for arrangement construction, using the function sampling proposed in \S~\ref{sec:sampler}.
Using function sampling, the following method efficiently constructs the regions that are not very small, {\em independent of the number of dimensions}. This is particularly useful, knowing the very small regions generate cherry-picked outputs and, hence, are already not of our interest.

First, let us review how the algorithm for construction of an arrangement works~\cite{edelsbrunner}: at a high-level, the algorithm adds hyperplanes one after the other to the space and updates the arrangement accordingly.
The first hyperplane $h_1$ partitions the space in to halfspaces $h_1^-$ and $h_1^+$. 
At any step, in order to add the next hyperplane $h_i$, we need to identify the regions that intersect with the new hyperplane and cut each such regions into two smaller regions, using $h_i$.

Identifying if a hyperplane intersects with a region requires to solve a linear programming.
Instead, we propose to use a set of function samples $\mathcal{S}$ for this purpose.
The idea is that a hyperplane $h$ intersects a region $R$, if there exists two points (function in our context) $w$ and $w'$ in $R$ such that $w$ belongs to $h^-$ and $w'$ is in $h^+$, i.e. $\sum_{i=0}^d h[i]w_i<0$ and $\sum_{i=0}^d h[i]w'_i>0$.
The samples in $\mathcal{S}$ discretize the search space.
If a hyperplane intersects a region such that the new regions are larger than $1/|\mathcal{S}|$, the intersection is expected to get identified using the samples.



In order to check if a new hyperplane $h$ intersects with a region $R$, we check for the existence of two points in $R$ such that
one belongs to $h^-$ and the other to $h^+$.
If so, we break $R$ in two new regions by partitioning the points inside $R$ between the two new regions.

Interestingly, it turns out that {\em despite the complexity of the arrangement and independent from the number of dimensions, a one-dimensional array can be used for partitioning the points.}
We use the 1D array $\ell$ for organizing the samples in a way that at any point of time, the samples that fall into a region are all beside each other, forming a range in the list.
For each region, we maintain the index of the first ($R.f$) and the last ($R.l$) samples falling into it.

Recall that the arrangement construction is iterative, and that every time a hyperplane $h$ is added to a region $R$, the algorithm partitions it into two sides: the intersection of $R$ with $h^-$ and its intersection with $h^+$.

Consider a set $\mathcal{S}$ of samples, drawn from \ar. Adding the first hyperplane $h_1$, partitions $\mathcal{S}$ to $\mathcal{S}\cap h_1^-$ and $\mathcal{S}\cap h_1^+$.
We can apply the famous {\em partition} algorithm, used in quick-sort~\cite{clrs}, and partition the samples in two sets such that all the samples in the range $[\mathcal{S}[1],\mathcal{S}[i]]$ belong to $R_1=\{h_1^-\}$ and all the ones with indices $i+1$ to $|\mathcal{S}|$ belong to$R_2=\{h_1^+\}$.
To do so, for every sample point $p$, we compute $s = \sum_{k=1}^d p.h_i[k]$. If $s<0$, $p$ belongs to $R_1$, otherwise $R_2$.

Interestingly, applying the partition algorithm on the points between $R.f$ and $R.l$ for a hyperplane $h$ both checks if $h$ intersects with $R$ and adds it to $R$ in the case of intersection. If the output index $i$ from the partition algorithm is $R.f-1$ or $R.l$, $h$ does not intersects $R$, otherwise the samples are already partitioned to $R_l= R\cap \{h_1^-\}$ and $R_r=R\cap \{h_1^+\}$, while $(R_l.sb,R_l.se)=(R.f,i)$ and $(R_r.sb,R_r    .se)=(i+1,R.l)$.


The partition algorithm on each subset $\ell_{R}$ of $\ell$ is in $O(|\ell_{R}|)$. since each region contains at least one point, and for adding a new hyperplane the algorithm partition is called only once, the total cost of adding a new hyperplane to the arrangement is $O(s)$, where $s$ is the number of samples in $\ell$. As a result, the total cost for constructing the arrangement of $\rho$ hyperplanes is (independent from the number of dimensions) $O(\rho\times s)$.

\section{Conclusion}
Data-driven decision making is often based on a single score for evaluating object and individuals. 
These scores can be obtained by combining different features either through a process learned by ML models, or using a weight vector designed by human experts, with their past experience and notions of what constitutes item quality.
While these scores provide an easy-to-understand representation, they can be erroneous, misleading, or have disparate impact, if not designed properly.

In this paper, focusing on linear scoring functions, we highlighted function sampling as a strong tool for achieving responsible scoring for tasks such as ranking and classification.
We proposed unbiased samplers for the entire function space, as well as a $\theta$-vicinity around a given function.
Without limiting the application scope of such a sampler for designing randomized algorithms, we showcased three diverse cases for fairness by design, characterizing the fitness of a dataset for fairness, and auditing cherry-picking, all in the context of ranking.
While, the demonstration has been on ranking, we would like to emphasize that similar approaches can be designed, using the proposed sampler, for other tasks such as classification. Finally, we showed how function sampling could be used to construct efficiently an approximate arrangement of hyperplanes, a higher level task that arises in many scenarios involving responsible scoring.

\bibliographystyle{unsrt}
\bibliography{ref}

\end{document}